\DeclareTextSymbolDefault{\ohorn}{T5}
\DeclareTextSymbolDefault{\uhorn}{T5}
\newcommand{\bert}{\textsc{bert}\xspace}
\newcommand{\invbert}{\textsc{bert}^{-1}\xspace}
\newcommand{\id}{\texttt{id}\xspace}
\newcommand{\R}{\mathbb{R}}
\newcommand{\MI}{\mathrm{I}}
\newcommand{\ent}{\mathrm{H}}
\newcommand{\KL}{\mathrm{KL}}
\newcommand{\relu}{\mathrm{ReLU}}
\newcommand{\vs}{\mathbf{s}}
\newcommand{\vr}{\mathbf{r}}
\newcommand{\vt}{\mathbf{t}}
\newcommand{\vc}{\mathbf{c}}
\newcommand{\ve}{\mathbf{e}}
\newcommand{\defn}[1]{\textbf{#1}}
\newcommand{\calT}{\mathcal{T}}
\newcommand{\calG}{\mathcal{G}}
\newcommand{\vocab}{\mathcal{V}}
\renewcommand{\th}{\text{th}}
\newcommand{\word}[1]{\textit{#1}}
\newcommand{\pos}[1]{\textsc{#1}}
\newcommand{\vtheta}{{\boldsymbol \theta}}
\newcommand{\qtheta}{q_{\vtheta}}
\newcommand{\qthetaone}{q_{\vtheta1}}
\newcommand{\qthetatwo}{q_{\vtheta2}}
\newcommand{\entqtheta}{\ent_{\qtheta}}
\newcommand{\entqthetaone}{\ent_{\qthetaone}}
\newcommand{\entqthetatwo}{\ent_{\qthetatwo}}
\newcommand{\KLqthetaone}{\KL_{\qthetaone}}
\newcommand{\KLqthetatwo}{\KL_{\qthetatwo}}
\newtheorem{prop}{Proposition}
\newtheorem{assumption}{Assumption}
\newtheorem{cor}{Corollary}
\newtheorem{thm}{Theorem}
\crefname{section}{\S}{\S\S}
\Crefname{section}{\S}{\S\S}
\crefname{table}{Table}{}
\crefname{figure}{Figure}{}
\crefname{algorithm}{Algorithm}{}
\crefname{equation}{eq.}{}
\crefname{appendix}{App.}{}
\crefname{thm}{Theorem}{}
\crefname{prop}{Proposition}{}
\crefname{cor}{Corollary}{}
\crefname{observation}{Observation}{}
\crefname{assumption}{Assumption}{}
\title{Information-Theoretic Probing for Linguistic Structure}
\newcommand{\ucambridge}{\normalfont \text{\textipa{D}}}
\newcommand{\ethz}{\text{\normalfont \textipa{Q}}}
\newcommand{\fairesearch}{\normalfont \text{\textipa{@}}}
\author{Tiago Pimentel$^{\ucambridge}$ Josef Valvoda$^{\ucambridge}$ Rowan Hall Maudslay$^{\ucambridge}$ Ran Zmigrod$^{\ucambridge}$ \\
\textbf{Adina Williams$^{\fairesearch}$ Ryan Cotterell$^{\ucambridge,\ethz}$} \\
  $^{\ucambridge}$University of Cambridge~\;~$^{\fairesearch}$Facebook AI Research~\;~%
  $^{\ethz}$ETH Z\"{u}rich \\
  \texttt{tp472@cam.ac.uk},~\;~ \texttt{jv406@cam.ac.uk},~\;~ \texttt{rh635@cam.ac.uk}, \\
  \texttt{rz279@cam.ac.uk},~\;~ \texttt{adinawilliams@fb.com},~\;~ \texttt{rdc42@cl.cam.ac.uk}
}
\date{}
\begin{document}
\maketitle
\begin{abstract}
The success of neural networks on a diverse set of NLP tasks has led researchers to question how much these networks actually ``know'' about natural language.
Probes are a natural way of assessing this.
When probing, a researcher chooses a linguistic task and trains a supervised model to predict annotations in that linguistic task from the network's learned representations.
If the probe does well, the researcher may conclude that the representations encode knowledge related to the task. 
A commonly held belief is that using simpler models as probes is better; the logic is that simpler models will \emph{identify linguistic structure}, but not \emph{learn the task itself}.
We propose an information-theoretic operationalization of probing as estimating mutual information that contradicts this received wisdom: one should always select the highest performing probe one can, even if it is more complex, since it will result in a tighter estimate, and thus reveal more of the linguistic information inherent in the representation.
The experimental portion of our paper focuses on empirically estimating the mutual information
between a linguistic property and BERT, comparing these estimates
to several baselines. We evaluate on a set of ten typologically diverse languages often underrepresented in NLP research---plus English---totalling eleven languages. 
Our implementation is available in \url{https://github.com/rycolab/info-theoretic-probing}.
\end{abstract}

\section{Introduction}
Neural networks are the backbone of modern state-of-the-art natural language processing (NLP) systems.
One inherent by-product of training a neural network is the production of real-valued representations.
Many speculate that these representations encode a continuous analogue of discrete linguistic properties, e.g.,\ part-of-speech tags, due to the networks' impressive performance on many NLP tasks~\citep{belinkov-etal-2017-evaluating}. 
As a result of this speculation, one common thread of research focuses on the construction of \textbf{probes}, i.e., supervised models that are trained to extract the linguistic properties directly~\citep{belinkov-etal-2017-evaluating,conneau-etal-2018-cram, peters-etal-2018-dissecting,zhang-bowman-2018-language, naik-etal-2018-stress,tenney2019}. A syntactic probe, then, is a model for extracting syntactic properties, such as part of speech, from the representations \cite{hewitt-liang-2019-designing}.

In this work, we question what the goal of probing for linguistic properties ought to be.
Informally, probing is often described as an attempt to discern how much information representations encode about a specific linguistic property.
We make this statement more formal: We assert that the natural operationalization of probing is estimating the mutual information \cite{cover-thomas} between a representation-valued random variable and a linguistic property--valued random variable.
This operationalization gives probing a clean, information-theoretic foundation, and allows us to consider what ``probing'' actually means. 

Our analysis also provides insight into how to choose a probe family: We show that choosing the highest-performing probe, independent of its complexity, is optimal for achieving the best estimate of mutual information (MI).
This contradicts the received wisdom that one should always select simple probes over more complex ones \cite{alain2016understanding,liu-etal-2019-linguistic,hewitt-manning-2019-structural}.
In this context, we also discuss the recent work of \newcite{hewitt-liang-2019-designing} who proposes \defn{selectivity} as a criterion for choosing families of probes.
\newcite{hewitt-liang-2019-designing} defines selectivity as the performance difference between a probe on the target task and a control task, writing ``[t]he selectivity of a probe puts linguistic task accuracy in context with the probe's capacity to memorize from word types.''
They further ponder: ``when a probe achieves high accuracy on a linguistic task using a representation, can we conclude that the representation encodes linguistic structure, or has the probe just learned the task?''
Information-theoretically, there is \emph{no difference} between learning the task and probing for linguistic structure, as we will show; thus, it follows that one should always employ the best possible probe for the task without resorting to artificial constraints.  

In the experimental portion of the paper, we empirically analyze word-level part-of-speech labeling, a common syntactic probing task \cite{hewitt-liang-2019-designing, DBLP:journals/corr/abs-1903-09442}, within our MI operationalization.
Working on a typologically diverse set of languages (Basque, Czech, English, Finnish, Indonesian, Korean, Marathi, Tamil, Telugu, Turkish and Urdu), we
show that only in five of these eleven languages do we recover higher estimates of mutual information between part-of-speech tags
and BERT \citep{devlin-etal-2019-bert}, a common contextualized embedder, than from a control. 
These modest improvements suggest that most of the information needed to tag part-of-speech well is encoded at the lexical level, and does not require sentential context.
Put more simply, words are not very ambiguous with respect to part of speech, a result known to practitioners of NLP \cite{garrette-etal-2013-real}.
We interpret this to mean that part-of-speech labeling is not a very informative probing task. 
We further investigate how BERT fares in dependency labeling, as analysed by \citet{tenney2019}. 
In this task, estimates based on BERT return more information than a type-level embedding in all analysed languages. 
However, our MI estimates still only show that BERT contains at most $12\%$ more information than the control.

We also remark that operationalizing probing information-theoretically gives us a simple, but stunning result: contextual word embeddings, e.g., BERT \cite{devlin-etal-2019-bert} and ELMo \cite{peters-etal-2018-deep}, contain \emph{the same} amount of information about the linguistic property of interest as the original sentence.
This follows from the data-processing inequality under a very mild assumption.
What this suggests is that, in a certain sense, probing for linguistic properties in representations may not be a well grounded enterprise at all. It also highlights the need to more formally define \emph{ease of extraction}.

\section{Word-Level Syntactic Probes for Contextual Embeddings} \label{sec:probescontextualembeddings}

Following \newcite{hewitt-liang-2019-designing}, we consider probes that examine syntactic knowledge in contextualized embeddings. 
These probes only consider a token's embedding in isolation, and try to perform the task using only that information.
Specifically, in this work, we consider part-of-speech (POS) and dependency labeling: determining a word's part of speech in a given sentence and the dependency relation for a pair of tokens joined by a dependency arc.
Say we wish to determine whether the word \word{love} is a \pos{noun} or a \pos{verb}.
This task requires the sentential context for success.
As an example, consider the utterance ``love is blind'' where, only with the context, is it clear that \word{love} is a \pos{noun}. 
Thus, to do well on this task, the contextualized embeddings need to encode enough about the surrounding context to correctly guess the POS. 
Analogously, we need the whole sentence to know that \word{love} is the \pos{nominal subject}. Whereas in the sentence ``greed can blind love'', \word{love} is the \pos{direct object}. %

\subsection{Notation}
Let $S$ be a random variable ranging over all possible sequences of words. For the sake of this paper, we  assume the vocabulary $\vocab$ is finite and, thus, the values $S$ can take are in $\vocab^*$. 
We write $\vs \in S$ as $\vs = s_1 \cdots s_{|\vs|}$ for a specific sentence, where each $s_i \in \vocab$ is a specific token in the sentence at the position $i \in \mathbb{Z}_{+}$.
We also define the random variable $W$ that ranges over the vocabulary $\vocab$.
We define both a sentence-level random variable $S$ and a word type-level random variable $W$ since each will be useful in different contexts during our exposition.

Next, let $T$ be a random variable whose possible values are the analyses $t$ that we want to consider for token $s_i$ in its sentential context, $\vs = s_1 \cdots s_i \cdots s_{|\vs|}$. 
In the discussion, we focus on predicting the part-of-speech tag of the $i^\text{th}$ word $s_i$, but the same results apply to the dependency label of an edge between two words.
We denote the set of values $T$ can take as the set $\calT$.
Finally, let $R$ be a representation-valued random variable for a token $s_i$ derived from the entire sentence $\vs$.
We write $\vr \in \R^d$ for a value of $R$. 
While any given value $\vr$ is a continuous vector, there are only a countable number of values $R$ can take.\footnote{In this work, we ignore the fact that the floating points have precision constraints in practice.}
To see this, note there are only a countable number of sentences in $\vocab^*$.\looseness=-1

Next, we assume there exists a true distribution $p(t, \vs, i)$ over analyses $t$ (elements of $\calT$),  sentences $\vs$ (elements of $\vocab^*$), and positions $i$ (elements of $\mathbb{Z}_{+}$).
Note that the conditional distribution $p(t \mid \vs, i)$ gives us the true distribution over analyses $t$ for the $i^{\th}$ word token in the sentence $\vs$.
We will augment this distribution such that $p$ is additionally a distribution over $\vr$, i.e.,
\begin{align}\label{eq:true}
    p(\vr, t, \vs, i) = \delta(\vr \mid \vs, i)\,p(t, \vs, i)  
\end{align}
where we define the augmentation as:
\begin{align}
    \delta(\vr \mid \vs, i) &= \mathbbm{1}\{\vr=\bert(\vs)_i\}
\end{align}
Since contextual embeddings are a deterministic function of a sentence $\vs$, the augmented distribution in \cref{eq:true} has no more randomness than the original---its entropy is the same.
We assume the values of the random variables defined above are distributed according to this (unknown) $p$.
While we do not have access to $p$, we assume the data in our corpus were drawn according to it. 
Note that $W$---the random variable over possible word types---is distributed according to
\begin{equation}
    p(w) = \sum_{\vs \in \vocab^*}\sum_{i=1}^{|\vs|} \delta(w \mid \vs, i)\,p(\vs, i)
\end{equation}
where we define the deterministic distribution
\begin{equation}
    \delta(w \mid \vs, i) = \mathbbm{1}\{\vs_i = w\}
\end{equation}

\subsection{Probing as Mutual Information}\label{subsec:probeasMI}
The task of supervised probing is an attempt to ascertain how much information a specific representation $\vr$ tells us about the value of $t$.
This is naturally operationalized as the mutual information, a quantity from information theory:
\begin{align}
   \MI(T; R) = \ent(T) - \ent(T \mid R)
\end{align}
where we define the entropy, which is constant with respect to the representations, as
\begin{align}
    \ent(T) = &-  \sum_{t \in \calT} p(t) \log p(t)
\end{align}
and we define the conditional entropy as
\begin{align}
    &\ent(T \mid R) = \int p(\vr) \,\ent\left(T \mid R = \vr \right)\mathrm{d}\vr \\
     &=\sum_{\vs \in \vocab^*} \sum_{i = 1}^{|\vs|} p(\vs, i) \,\ent\left(T \mid R = \bert(\vs)_i \right) \nonumber
\end{align}
where the point-wise conditional entropy inside the sum is defined as 
\begin{equation}
\ent(T \mid R = \vr) = -\sum_{t \in \calT} p(t \mid \vr) \log p(t \mid \vr)
\end{equation}
Again, we will not know any of the distributions required to compute these quantities; the distributions in the formulae are marginals and conditionals of the true distribution discussed in \cref{eq:true}.

\subsection{Bounding Mutual Information}
The desired conditional entropy, $\ent(T \mid R)$ is not readily available, but with a model $\qtheta (\vt \mid \vr)$ in hand, we can upper-bound it by measuring their empirical cross entropy:
\begin{align}\label{eq:ent_estimate}
    \ent&(T\mid R) := -\underset{{(t, \vr) \sim p(\cdot, \cdot)}}{\mathbb{E}} \left[ \log p(t \mid \vr) \right] \\
    &= - \underset{{(t, \vr) \sim p(\cdot, \cdot)}}{\mathbb{E}} \left[ \log \frac{p(t \mid \vr) \qtheta(t \mid \vr)}{\qtheta(t \mid \vr)} \right] \nonumber \\
    &= - \underset{{(t, \vr) \sim p(\cdot, \cdot)}}{\mathbb{E}} \left[ \log \qtheta(t \mid \vr) + \log \frac{p(t \mid \vr)}{\qtheta(t \mid \vr)} \right] \nonumber \\
    &= \underbrace{\ent_{\qtheta}(T \mid R)}_{\textit{estimate}} - \underbrace{\underset{\vr \sim p(\cdot)}{\mathbb{E}} \KL(p(\cdot \mid \vr) \mid \mid \qtheta(\cdot \mid \vr))}_{\textit{expected estimation error}} \nonumber
\end{align}
where $\ent_{\qtheta}(T \mid R)$ is the cross-entropy we obtain by using $\qtheta$ to get this estimate.
Since the KL divergence is always positive, we may lower-bound the desired mutual information 
\begin{align}
    \MI(T; R) &:= \ent(T) - \ent(T \mid R) \nonumber \\
              &\geq \ent(T) - \ent_{\qtheta}(T \mid R)
\end{align}
This bound gets tighter, the more similar---in the sense of the KL divergence---$\qtheta(\cdot \mid \vr)$ is to the true distribution $p(\cdot \mid \vr)$.

\paragraph{Bigger Probes are Better.}
If we accept mutual information as a natural operationalization for how much representations encode a target linguistic task (\S \ref{subsec:probeasMI}), the best estimate of that mutual information is the one where the probe $\qtheta(t \mid \vr)$ is best at the target task.
In other words, we want the best probe $\qtheta(t \mid \vr)$ such that we get the tightest bound to the actual distribution $p(t\mid \vr)$.
This paints the question posed in \newcite{hewitt-liang-2019-designing}, who write
\begin{displayquote}
``when a probe achieves high accuracy on a linguistic task using a representation, can we conclude that the representation encodes linguistic structure, or has the probe just learned the task?''
\end{displayquote}
as a false dichotomy.\footnote{Assuming that the authors intended `or' here as strictly non-inclusive. See \citet[91]{levinson2000} and \citet[1743]{chevallier2008} on conversational implicatures from `or'.}
From an information-theoretic view, we will always prefer the probe that does better at the target task, since there is \emph{no difference} between learning a task and the representations encoding the linguistic structure.

\section{Control Functions}\label{sec:control-functions}
To place the performance of a probe in perspective, \newcite{hewitt-liang-2019-designing} develops the notion of a control task.
Inspired by this, we develop an analogue we term \defn{control functions}, which are functions of the representation-valued random variable $R$.
Similar to \newcite{hewitt-liang-2019-designing}'s control tasks, the goal of a control function $\vc(\cdot)$ is to place the mutual information $\MI(T; R)$ in the context of a baseline that the control function encodes.
Control functions have their root in the data-processing inequality \cite{cover-thomas}, which states that, for any function $\vc(\cdot)$, we have
\begin{equation}\label{eq:data-processing}
    \MI(T; R) \geq \MI(T; \vc(R))
\end{equation}
In other words, information can only be lost by processing data.
A common adage associated with this inequality is ``garbage in, garbage out.'' 

\subsection{Type-Level Control Functions}
We focus on type-level control functions in this paper. These functions have the effect of decontextualizing the embeddings, being related to the common trend of analyzing probe results in comparison to input layer embeddings \citep{belinkov2017analyzing,liu-etal-2019-linguistic,hewitt-manning-2019-structural,tenney2019}.
Such functions allow us to inquire how much the contextual aspect of the contextual embeddings help the probe perform the target task.
To show that we may map from contextual embeddings to the identity of the word type, we need the following assumption.
\begin{assumption}\label{ass:one}
Every contextualized embedding is unique, i.e., for any pair of sentences $\vs, \vs' \in \vocab^*$, we have $(\vs \neq \vs') \mid\mid (i \neq j) \Rightarrow \bert(\vs)_i \neq \bert(\vs')_j$ for all $i \in \{1, \ldots |\vs|\}$ and $j \in \{1, \ldots, |\vs'|\}$.
\end{assumption}
We note that \cref{ass:one} is mild.
Contextualized word embeddings map words (in their context) to $\R^d$, which is an uncountably infinite space.
However, there are only a countable number of sentences, which implies only a countable number of sequences of real vectors in $\R^d$ that a contextualized embedder may produce.
The event that any two embeddings would be the same across two distinct sentences is infinitesimally small.\footnote{Indeed, even if we sampled every embedding randomly from a $d$-dimensional Gaussian, the probability that we would ever sample the same real vector is zero.}
\cref{ass:one} yields the following corollary.
\begin{cor}\label{cor:one}
There exists a function $\emph{\id} : \R^d \rightarrow V$ that maps a contextualized embedding to its word type. The function $\emph{\id}$ is not a bijection since multiple embeddings will map to the same type.
\end{cor}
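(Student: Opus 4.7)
The plan is to construct the map $\id$ directly from the injectivity guaranteed by \cref{ass:one}, and then exhibit a witness demonstrating non-injectivity (hence non-bijectivity) of $\id$.

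First, I would fix notation: let $B = \{\bert(\vs)_i : \vs \in \vocab^*,\, 1 \leq i \leq |\vs|\} \subseteq \R^d$ denote the image of the contextualized embedder. By \cref{ass:one}, the map $(\vs,i) \mapsto \bert(\vs)_i$ is injective, so for every $\vr \in B$ there is a unique pair $(\vs_{\vr}, i_{\vr})$ with $\bert(\vs_{\vr})_{i_{\vr}} = \vr$. Define
\begin{equation*}
\id(\vr) = \begin{cases} (\vs_{\vr})_{i_{\vr}} & \text{if } \vr \in B, \\ w_0 & \text{otherwise,} \end{cases}
\end{equation*}
where $w_0 \in \vocab$ is any fixed word type (the value on $\R^d \setminus B$ is irrelevant to the statement). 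This is well-defined precisely because of the uniqueness half of \cref{ass:one}, and by construction $\id(\bert(\vs)_i) = s_i$, so $\id$ sends every contextualized embedding to its underlying word type.

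To see $\id$ is not a bijection, it suffices to exhibit a single word type $w \in \vocab$ that occurs in two distinct (sentence, position) pairs $(\vs, i) \neq (\vs', j)$ with $s_i = s'_j = w$. By \cref{ass:one} the two embeddings $\bert(\vs)_i$ and $\bert(\vs')_j$ are distinct, yet $\id$ sends both to $w$, so $\id$ fails to be injective. Any polysemous or even merely repeated word in the corpus furnishes such an example; one can simply note that $\vocab$ is finite while $\vocab^*$ is infinite, so by the pigeonhole principle some word type is reused across sentences.

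I do not anticipate any real obstacle: the corollary is essentially a one-line consequence of unpacking injectivity of $\bert$ on (sentence, position) pairs. The only subtle point worth stating explicitly is that $\id$ must be defined on all of $\R^d$ rather than just on $B$, which is why the proof includes the harmless default value $w_0$.
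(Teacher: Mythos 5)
Your proof is correct and takes the same route the paper implicitly relies on: the paper states \cref{cor:one} as an immediate consequence of \cref{ass:one} without spelling out the construction, and your explicit definition of $\id$ on the image of $\bert$ (with an arbitrary default elsewhere) together with the pigeonhole observation for non-injectivity is exactly the intended argument. No gaps.
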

Using \cref{cor:one}, we can show that any \emph{non-contextualized} word embedding will contain \emph{no more} information than a contextualized word embedding.
More formally, we do this by constructing a look-up function $\ve : V \rightarrow \R^d$ that maps a word to a word embedding.
This embedding may be one-hot, randomly generated ahead of time, or the output of a data-driven embedding method, e.g. fastText \cite{fasttext}. 
We can then construct a control function as the composition of the look-up function $\ve$ and the id function $\id$.
Using the data-processing inequality, we can prove that in a word-level prediction task, any non-contextual (type level) word-embedding will contain no more information than a contextualized (token level) one, such as BERT and ELMo.
Specifically, we have
\begin{align}
    \MI(T ; R) &\geq \\ 
    &\MI(T ; \id(R)) = \MI(T; W) \geq \MI(T; \ve(W)) \nonumber
\end{align}
This result\footnote{Note that although this result holds in theory, in practice the functions $\id$ and $\ve(\cdot)$ might be arbitrarily hard to estimate. This is discussed in length in \cref{sec:ease-extract}.} is intuitive and, perhaps, trivial---context matters information-theoretically.
However, it gives us a principled foundation by which to measure the effectiveness of probes as we will show in \cref{sec:gain}. 

\subsection{How Much Information Did We Gain?}\label{sec:gain}
We will now quantify how much a contextualized word embedding knows about a task with respect to a specific control function $\vc(\cdot)$.
We term how much more information the contextualized embeddings have about a task than a control variable the \defn{gain}, $\calG$, which we define as
\begin{align}\label{eq:gain}
    \calG(T, &\,R, \vc) = \MI(T; R) - \MI(T; \vc(R)) \\ 
    &= \ent(T \mid \vc(R)) - \ent(T \mid R) \geq 0 \nonumber
\end{align}
The gain function will be our method for measuring how much more information contextualized representations have over a controlled baseline, encoded as the function $\vc$.
We will empirically estimate this value in \cref{sec:experiments}.
Interestingly enough, the gain has a straightforward interpretation.
\begin{prop}\label{prop:interpretation}
The gain function is equal to the following conditional mutual information
\begin{equation}
\MI(T; R \mid \vc(R)) = \calG(T, R, \vc)
\end{equation}
\end{prop}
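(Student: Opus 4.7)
The plan is to verify the identity by applying the chain rule for mutual information to the joint variable $(R, \vc(R))$, exploiting the fact that $\vc$ is a deterministic function of $R$. This is the cleanest route because the definition of gain is already written as a difference of two mutual informations, and conditional mutual information naturally arises from the chain rule.

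First I would write the chain rule for $\MI(T; R, \vc(R))$ in two ways, expanding once on $\vc(R)$ first and once on $R$ first. The expansion that is useful here is
\begin{equation}
\MI(T; R, \vc(R)) = \MI(T; \vc(R)) + \MI(T; R \mid \vc(R)).
\end{equation}
Next I would observe that because $\vc(R)$ is a deterministic function of $R$, the pair $(R, \vc(R))$ carries exactly the same information as $R$ alone; formally, $\ent(R, \vc(R)) = \ent(R)$ and more to the point $p(t \mid \vr, \vc(\vr)) = p(t \mid \vr)$ pointwise, so $\MI(T; R, \vc(R)) = \MI(T; R)$. Substituting this into the chain rule and rearranging gives
\begin{equation}
\MI(T; R \mid \vc(R)) = \MI(T; R) - \MI(T; \vc(R)) = \calG(T, R, \vc),
\end{equation}
which is the claim.

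As a sanity check, I would redo the derivation via entropies: by definition, $\MI(T; R \mid \vc(R)) = \ent(T \mid \vc(R)) - \ent(T \mid R, \vc(R))$, and determinism of $\vc$ gives $\ent(T \mid R, \vc(R)) = \ent(T \mid R)$, which reproduces the second line of \cref{eq:gain}. This cross-check confirms that the non-negativity of $\calG$ stated in \cref{eq:gain} is just the non-negativity of conditional mutual information. There is no real obstacle: the only subtle point is the step $\MI(T; R, \vc(R)) = \MI(T; R)$ (equivalently, $\ent(T \mid R, \vc(R)) = \ent(T \mid R)$), and this is a one-line consequence of $\vc$ being a function rather than a stochastic map. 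I would state that step explicitly so the reader does not have to re-derive it.
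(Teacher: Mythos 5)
Your proof is correct, and it reaches the same conclusion by a route that is slightly different from the paper's. The paper starts from the interaction-information identity $\MI(T; R \mid \vc(R)) = \MI(T; R) - \MI(T; R; \vc(R))$ and then asserts $\MI(T; R; \vc(R)) = \MI(T; \vc(R))$, justifying this with the remark that ``$R$ encodes, by construction, all the information about $T$ provided by $\vc(R)$.'' You instead apply the chain rule to the joint $\MI(T; R, \vc(R))$, expand as $\MI(T; \vc(R)) + \MI(T; R \mid \vc(R))$, and use determinism of $\vc$ to collapse $\MI(T; R, \vc(R))$ to $\MI(T; R)$. The two arguments are equivalent --- the paper's step $\MI(T; R; \vc(R)) = \MI(T; \vc(R))$ is exactly $\MI(T; \vc(R) \mid R) = 0$ in disguise, which is the same fact you invoke --- but yours has two advantages: it avoids interaction information (a quantity whose sign behaviour is notoriously subtle and which readers may not recognize), and it makes the determinism step fully explicit, including the pointwise entropy form $\ent(T \mid R, \vc(R)) = \ent(T \mid R)$ that the paper's one-line remark leaves implicit. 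Your sanity check via entropies is also a nice touch, since it simultaneously recovers the second line of \cref{eq:gain} and exhibits the non-negativity of the gain as non-negativity of a conditional mutual information.
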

\begin{proof}
\begin{align*}%
    \!\MI(T; R \mid \vc(R)) &:= \MI(T; R) - \MI(T; R; \vc(R)) \\
    &= \MI(T; R) - \MI(T; \vc(R))  \\
    &= \calG(T, R, \vc)
\end{align*}
The jump from the first to the second equality follows since $R$ encodes, by construction, all the information about $T$ provided by $\vc(R)$.
\end{proof}
\cref{prop:interpretation} gives us a clear understanding of the quantity we wish to estimate: It is how much information about a task is encoded in the representations, given some control knowledge.
If properly designed, this control transformation will remove information from the probed representations.

\subsection{Approximating the Gain}
The gain, as defined in \cref{eq:gain}, is intractable to compute.
In this section we derive a pair of variational bounds on $\calG(T, R, \ve)$---one upper and one lower.
To approximate the gain, we will simultaneously minimize an upper and maximize a lower-bound on \cref{eq:gain}.
We begin by approximating the gain in the following manner
\begin{align}\label{eq:approx}
     \calG(T, R, \ve) &\approx \\
     &\underbrace{\entqthetatwo(T \mid \vc(R)) - \entqthetaone(T \mid R)}_{\textit{estimated }\calG_{\qtheta}(T, R, \ve)}  \nonumber
\end{align}
these cross-entropies can be empirically estimated.
We will assume access to a corpus $\{(t_i, \vr_i)\}_{i=1}^N$ that is human-annotated for the target linguistic property; we further assume that these are samples $(t_i, \vr_i) \sim p(\cdot, \cdot)$ from the true distribution. 
This yields a second approximation that is tractable:
\begin{equation}
     \entqtheta(T ; R) \approx -\frac{1}{N} \sum\limits_{i=1}^{N} \log \qtheta(t_i \mid \vr_i)
\end{equation}
This approximation is exact in the limit $N \rightarrow \infty$ by the law of large numbers. 

We note the approximation given in \cref{eq:approx} may be either positive or negative and its estimation error follows from \cref{eq:ent_estimate}:
\begin{align}\label{eq:approx_error}
    \Delta &\,= \underset{\vr \sim p(\cdot)}{\mathbb{E}} \KL(p(\cdot \mid \vr) \mid \mid \qthetaone(\cdot \mid \vr)) \\
        &- \underset{\vr \sim p(\cdot)}{\mathbb{E}} \KL(p(\cdot \mid \vc(\vr)) \mid \mid \qthetatwo(\cdot \mid \vc(\vr))) \nonumber \\
        &\,= \KLqthetaone(T, R) - \KLqthetatwo(T, \vc(R)) \nonumber
\end{align}
where we abuse the KL notation to simplify the equation.
This is an undesired behavior since we know the gain itself is non-negative by the data-processing inequality, but we have yet to devise a remedy.\looseness=-1

We justify the approximation in \cref{eq:approx} with a pair of variational bounds.
The following two corollaries are a result of \cref{thm:variational_bounds} in \cref{appendix:a}.
\begin{cor}\label{cor:upper}
We have the following upper-bound on the gain
\begin{align}\label{eq:upper}
    \calG(T, R,\ve) & \\
    &\leq \calG_{\qtheta}(T, R,\ve)\!+\! \KLqthetaone(T, R) \nonumber
\end{align}
\end{cor}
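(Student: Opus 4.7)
The plan is to derive the bound by applying the exact identity from equation \cref{eq:ent_estimate} to both conditional entropies that appear in the definition of the gain, and then to discard one non-negative KL term. The key observation is that \cref{eq:ent_estimate} gives us the decomposition
\[
\ent(T \mid R) = \entqthetaone(T \mid R) - \KLqthetaone(T, R),
\]
where $\KLqthetaone(T, R) = \mathbb{E}_{\vr \sim p(\cdot)} \KL(p(\cdot \mid \vr) \mid\mid \qthetaone(\cdot \mid \vr)) \geq 0$, and an entirely analogous identity holds for the control, with $R$ replaced by $\ve(R)$ and $\qthetaone$ replaced by $\qthetatwo$.

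First I would substitute both identities into the definition of the gain $\calG(T, R, \ve) = \ent(T \mid \ve(R)) - \ent(T \mid R)$. After cancellation this yields
\[
\calG(T, R, \ve) = \calG_{\qtheta}(T, R, \ve) + \KLqthetaone(T, R) - \KLqthetatwo(T, \ve(R)),
\]
since the cross-entropy difference is exactly $\calG_{\qtheta}(T, R, \ve)$ by definition of the estimated gain in \cref{eq:approx}. Next I would invoke non-negativity of the KL divergence, which implies $\KLqthetatwo(T, \ve(R)) \geq 0$, and drop this term to obtain the claimed upper bound. The same manipulation with the opposite KL term dropped will yield the matching lower bound in the companion corollary.

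The only real obstacle is bookkeeping: one must keep track of which estimator ($\qthetaone$ versus $\qthetatwo$) is attached to which random variable ($R$ versus $\ve(R)$), and be careful that the shorthand $\KLqtheta(T, R)$ really denotes an expectation over $\vr \sim p(\cdot)$ as defined in \cref{eq:approx_error}. Since the whole argument is an exact identity plus one application of KL non-negativity, there are no analytical difficulties; it is essentially immediate from \cref{eq:ent_estimate} and can be presented in three short lines, or alternatively stated as a direct consequence of the more general \cref{thm:variational_bounds} in the appendix.
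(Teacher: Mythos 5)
Your proposal is correct and matches the paper's own argument: both substitute the exact decomposition from \cref{eq:ent_estimate} into $\ent(T \mid \vc(R)) - \ent(T \mid R)$, arrive at $\calG = \calG_{\qtheta} + \KLqthetaone(T \mid R) - \KLqthetatwo(T, \vc(R))$ (cf.\ \cref{fig:estimation_error}), and drop the non-negative $\KLqthetatwo$ term. You also correctly identify this as a direct consequence of \cref{thm:variational_bounds}, which is precisely how the paper organizes the derivation.
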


\begin{cor}\label{cor:lower}
We have the following lower-bound on the gain
\begin{align}\label{eq:lower}
    \calG(T, R,\ve) & \\
    &\geq \calG_{\qtheta}(T, R,\ve) -  \KLqthetatwo(T, \vc(R)) \nonumber
\end{align}
\end{cor}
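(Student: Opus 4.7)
The plan is to derive the bound directly from the decomposition of cross-entropy already established in \cref{eq:ent_estimate}, applied twice: once with the probe $\qthetaone$ on the representation $R$, and once with the probe $\qthetatwo$ on the controlled representation $\vc(R)$. Combined with the non-negativity of KL divergence, this will deliver the inequality almost immediately; in fact the bound should be obtainable without appealing to \cref{thm:variational_bounds} in \cref{appendix:a}, so I would present a short self-contained derivation and simply remark that it is the ``$\qthetatwo$-side'' specialization of that theorem.

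First I would write, using \cref{eq:ent_estimate},
\begin{align*}
\ent(T \mid R) &= \entqthetaone(T \mid R) - \KLqthetaone(T, R), \\
\ent(T \mid \vc(R)) &= \entqthetatwo(T \mid \vc(R)) - \KLqthetatwo(T, \vc(R)).
\end{align*}
Subtracting the first line from the second and regrouping gives
\begin{equation*}
\calG(T, R, \ve) = \calG_{\qtheta}(T, R, \ve) + \KLqthetaone(T, R) - \KLqthetatwo(T, \vc(R)).
\end{equation*}
Then I would invoke the non-negativity of KL divergence, $\KLqthetaone(T, R) \geq 0$, which is precisely the point where we discard the part of the error we cannot control on the contextual side, to conclude
\begin{equation*}
\calG(T, R, \ve) \geq \calG_{\qtheta}(T, R, \ve) - \KLqthetatwo(T, \vc(R)),
\end{equation*}
as claimed.

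There is essentially no obstacle here: the only subtlety is keeping the two directions of the KL terms straight (one is added, one is subtracted, and only the added one can be dropped via non-negativity while preserving the inequality). It is worth noting explicitly in the writeup that the corresponding upper bound of \cref{cor:upper} drops the \emph{other} KL term instead, which is why these two bounds together justify the approximation in \cref{eq:approx}. I would also briefly flag that, conceptually, the lower bound is tight when $\qthetaone(\cdot \mid \vr) = p(\cdot \mid \vr)$, that is, when our probe on the contextual representation recovers the true conditional, matching the intuition in \cref{subsec:probeasMI} that stronger probes yield tighter estimates.
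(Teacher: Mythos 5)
Your derivation is correct and is in fact identical to the paper's own route: what you present as a ``self-contained'' argument is precisely the proof of \cref{thm:variational_bounds} in \cref{appendix:a} (decompose both conditional entropies via \cref{eq:ent_estimate}, subtract to expose the error term $\KLqthetaone(T,R) - \KLqthetatwo(T,\vc(R))$, then drop the non-negative $\KLqthetaone$ term). The paper simply packages that decomposition as a theorem and states \cref{cor:lower} as the one-line consequence, so there is no substantive difference in approach.
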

\noindent The conjunction of \cref{cor:upper} and \cref{cor:lower} suggest a simple procedure for finding a good approximation: We choose $\qthetaone(\cdot \mid r)$ and $\qthetatwo(\cdot \mid r)$ so as to \emph{minimize} \cref{eq:upper} and \emph{maximize} \cref{eq:lower}, respectively.
These distributions contain no overlapping parameters, by construction, so these two optimization routines may be performed independently.
We will optimize both with a gradient-based procedure, discussed in \cref{sec:experiments}.

\section{Understanding Probing Information-Theoretically}\label{sec:we-hate-probing}
In \cref{sec:control-functions}, we developed an information-theoretic framework for thinking about probing contextual word embeddings for linguistic structure.
However, we now cast doubt on whether probing makes sense as a scientific endeavour.
We prove in \cref{sec:context} that contextualized word embeddings, by construction, contain no more information about a word-level syntactic task than the original sentence itself.
Nevertheless, we do find a meaningful scientific interpretation of control functions.
We expound upon this in \cref{sec:control-functions-meaning}, arguing that control functions are useful, not for understanding representations, but rather for understanding the influence of sentential context on word-level syntactic tasks, e.g., labeling words with their part of speech.

\subsection{You Know Nothing, BERT}\label{sec:context}
To start, we note the following corollary
\begin{cor}
It directly follows from \cref{ass:one} that $\bert$ is a bijection between sentences $\vs$ and sequences of embeddings $\langle \vr_1, \ldots, \vr_{|\vs|} \rangle$. 
As $\bert$ is a bijection, it has an inverse, which we will denote as $\invbert$.
\end{cor}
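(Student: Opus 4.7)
The plan is to deduce injectivity of $\bert$ viewed as a map from $\vocab^*$ to sequences of real vectors, and then observe that any injection becomes a bijection when its codomain is restricted to its image, which is all that is needed for an inverse to exist.

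First, I would unpack \cref{ass:one}: for any $\vs, \vs' \in \vocab^*$ and any valid positions $i \in \{1, \ldots, |\vs|\}$, $j \in \{1, \ldots, |\vs'|\}$, if $(\vs, i) \neq (\vs', j)$ then $\bert(\vs)_i \neq \bert(\vs')_j$. That is, the map sending a (sentence, position) pair to its contextual embedding is injective.

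Second, I would promote this token-level injectivity to sentence-level injectivity of the sequence-valued map $\vs \mapsto \langle \bert(\vs)_1, \ldots, \bert(\vs)_{|\vs|}\rangle$. Suppose $\vs \neq \vs'$. If $|\vs| \neq |\vs'|$ then the two output sequences have different lengths and are trivially unequal. Otherwise $|\vs| = |\vs'|$ and $(\vs, 1) \neq (\vs', 1)$ as tagged pairs (since the sentences differ), so \cref{ass:one} yields $\bert(\vs)_1 \neq \bert(\vs')_1$; hence the two sequences disagree already in their first coordinate. In either case the sequences differ, establishing injectivity.

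Finally, I would restrict the codomain of $\bert$ to its image, i.e.\ the set $\{\langle \bert(\vs)_1, \ldots, \bert(\vs)_{|\vs|}\rangle : \vs \in \vocab^*\} \subseteq \bigcup_{n\geq 0} (\R^d)^n$. On this restricted codomain, $\bert$ is surjective by definition and injective by the previous paragraph, hence a bijection. The inverse function $\invbert$ is then the set-theoretic inverse: given an embedding sequence in the image, it returns the unique sentence that produced it. There is no real obstacle here beyond being careful that the bijection is stated relative to the image of $\bert$ rather than the full space of vector sequences; this is the only subtlety worth flagging in the write-up.
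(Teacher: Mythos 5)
Your proof is correct and fills in exactly the details the paper leaves implicit when it asserts that the corollary follows directly from \cref{ass:one}: injectivity on sentences via the length/first-coordinate case split, then bijection onto the image so that $\invbert$ is well-defined. The paper offers no explicit proof, so the only thing worth noting is that your caveat about restricting the codomain to the image is the right one to make explicit, and you already flag it.
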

\begin{thm}\label{thm:bert}
$\bert(S)$ cannot provide more information about $T$ than the sentence $S$ itself.
\end{thm}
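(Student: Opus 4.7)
My plan is to prove this via two applications of the data-processing inequality (\cref{eq:data-processing}), one in each direction, using the bijection established in the corollary just before the theorem.

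First, since $\bert$ is a deterministic function of the sentence $S$, the data-processing inequality gives immediately that
\[
\MI(T; \bert(S)) \leq \MI(T; S),
\]
which is exactly the statement of the theorem. So one direction of the argument is essentially a one-liner: we simply invoke \cref{eq:data-processing} with $\vc = \bert$ applied to the random variable $S$ (treating $S$ here as playing the role of $R$ in that inequality's statement, or equivalently rederiving the inequality for arbitrary deterministic maps, which is the standard form in \citet{cover-thomas}).

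To make the result more informative, I would then note that the corollary preceding the theorem guarantees the existence of an inverse $\invbert$, so the data-processing inequality also applies in the opposite direction: treating $\invbert$ as a deterministic function of $\bert(S)$,
\[
\MI(T; S) = \MI(T; \invbert(\bert(S))) \leq \MI(T; \bert(S)).
\]
Combining the two inequalities yields $\MI(T; \bert(S)) = \MI(T; S)$, a strengthening of the stated theorem into an equality. This matches the rhetorical purpose of the section (``You Know Nothing, BERT''): contextual embeddings contain exactly the same information about $T$ as the underlying sentence, since bijections preserve mutual information.

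The main ``obstacle'' is not really technical—it is making sure the argument is stated at the level of random variables and that \cref{ass:one} is doing the work that licenses the inverse. I would therefore be explicit that the extension of $p$ given in \cref{eq:true} makes $\bert(S)$ a measurable deterministic function of $S$ (so DPI applies in the forward direction) and that \cref{ass:one}, via the corollary, ensures $\invbert$ is well defined on the support of the embedding distribution (so DPI applies in the reverse direction). Beyond that, no calculation is required.
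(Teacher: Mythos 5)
Your proof is correct and matches the paper's argument essentially line for line: both apply the data-processing inequality twice, once in the forward direction (so $\MI(T;\bert(S)) \leq \MI(T;S)$) and once via $\invbert$ (so $\MI(T;S) = \MI(T;\invbert(\bert(S))) \leq \MI(T;\bert(S))$), concluding equality because $\bert$ is a bijection by the preceding corollary. The paper presents the same sandwich as a single chain of inequalities, but the reasoning is identical.
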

\begin{proof}
\begin{align}
    \MI(T ; S) &\geq \MI(T; \bert(S)) \\
               &\geq \MI(T; \invbert(\bert(S))) \nonumber \\
               &= \MI(T ; S) \nonumber 
\end{align}
This implies $\MI(T ; S) = \MI(T; \bert(S))$.\footnote{Actually, \citeauthor{hewitt-liang-2019-designing} likely had an intuition about this in mind when they wrote ``[a] sufficiently expressive probe with enough training data could learn \emph{any} task on top of it'' \citep{hewitt-liang-2019-designing}.} 
This is not a BERT-specific result---it rests on the fact that the data-processing inequality is tight for bijections. 
\end{proof} 
While \cref{thm:bert} is a straightforward application of the data-processing inequality, it has deeper ramifications for probing.
It means that if we search for syntax in the contextualized word embeddings of a sentence, we should not expect to find any more syntax than is present in the original sentence.
In a sense, \cref{thm:bert} is a cynical statement: under our operationalization, the endeavour of finding syntax in contextualized embeddings sentences is nonsensical.
This is because, under \cref{ass:one}, we know the answer \textit{a priori}---the contextualized word embeddings of a sentence contain exactly the same amount of information about syntax as does the sentence itself.

\subsection{What Do Control Functions Mean?}\label{sec:control-functions-meaning}
Information-theoretically, the interpretation of control functions is also interesting.
As previously noted, our interpretation of control functions in this work does not provide information about the representations themselves.
Indeed, the same reasoning used in \cref{cor:one} can be used to devise a function $\id_s(\vr)$ which maps a contextual representation of a token back to its sentence.
For a type-level control function $\vc$, by the data-processing inequality, we have that $\MI(T; W) \geq \MI(T; \vc(R))$.
Consequently, we can get an upper-bound on how much information we can get out of a decontextualized representation.
If we assume we have perfect probes, then we get that the true gain function is $\MI(T; S) - \MI(T; W) = \MI(T; S \mid W)$.
This quantity is interpreted as the amount of knowledge we gain about the word-level task $T$ by knowing $S$ (i.e., the sentence) in addition to  $W$ (i.e., the word type).
Therefore, a perfect probe provides insights about language and not about the actual representations.

\subsection{Discussion: Ease of Extraction} \label{sec:ease-extract}
We do acknowledge another interpretation of the work of \newcite{hewitt-liang-2019-designing} \textit{inter alia}; BERT makes the syntactic information present in an ordered sequence of words more easily extractable.
However, ease of extraction is not a trivial notion to operationalize, and indeed, we know of no attempt to do so;\footnote{\newcite{Xu2020A} is a possible exception.} it is certainly more complex to determine than the number of layers in a multi-layer perceptron (MLP).
Indeed, a MLP with a single hidden layer can represent any function over the unit cube, with the caveat that we may need a very large number of hidden units \cite{cybenko1989approximation}.

Although for perfect probes the above results should hold, in practice $\id(\cdot)$ and $\vc(\cdot)$ may be hard to approximate.
Furthermore, if these functions were to be learned, they might require an unreasonably large dataset.
Learning a random embedding control function, for example, would require a dataset containing all words in the vocabulary $V$---in an open vocabulary setting an infinite dataset would be required!
``Better'' representations should make their respective probes easily learnable---and consequently their encoded information is more accessible \citep{voita2020information}.

We suggest that future work on probing should focus on operationalizing ease of extraction more rigorously---even though we do not attempt this ourselves.
As previously argued by \newcite[\S 5]{saphra-lopez-2019-understanding}, the advantage of simple probes is that they may reveal something about the \emph{structure} of the encoded information---i.e., is it structured in such a way that it can be easily taken advantage of by downstream consumers of the contextualized embeddings?
Many researchers who are interested in less complex probes have, either implicitly or explicitly, had this in mind.

\section{A Critique of Control Tasks}
We agree with \newcite{hewitt-liang-2019-designing}---and with both \citet{zhang-bowman-2018-language} and \citet{tenney2019}---that we should have controlled baselines when probing for linguistic properties. However, we disagree with parts of their methodology for constructing control tasks. We present these disagreements here. 

\subsection{Structure and Randomness}
\newcite{hewitt-liang-2019-designing} introduces control tasks to evaluate the effectiveness of probes.
We draw inspiration from this technique as evidenced by our introduction of control functions.
However, we take issue with the suggestion that controls should have \defn{structure} and \defn{randomness}, to use the terminology from \newcite{hewitt-liang-2019-designing}.
They define structure as ``the output for a word token is a deterministic function of the word type.''
This means that they are stripping the language of ambiguity with respect to the target task.
In the case of part-of-speech labeling, \word{love} would either be a \pos{noun} or a \pos{verb} in a control task, never both: this is a problem. The second feature of control tasks is randomness, i.e., ``the output for each word type is sampled independently at random.'' %
In conjunction, structure and randomness may yield a relatively trivial task that does not look like natural language.

What is more, there is a closed-form solution for an optimal, retrieval-based ``probe'' that has zero learned  parameters: %
If a word type appears in the training set, return the label with which it was annotated there, otherwise return the most frequently occurring label across all words in the training set.
This probe will achieve an accuracy that is 1 minus the out-of-vocabulary rate (the number of tokens in the test set that correspond to novel types divided by the number of tokens) times the percentage of tags in the test set that do not correspond to the most frequent tag (the error rate of the guess-the-most-frequent-tag classifier).
In short, the best model for a control task is a pure memorizer that guesses the most frequent tag for out-of-vocabulary words.

\subsection{What's Wrong with Memorization?}
\newcite{hewitt-liang-2019-designing} proposes that probes should be optimized to maximize accuracy \emph{and} selectivity.
Recall selectivity is given by the distance between the accuracy on the original task and the accuracy on the control task using the same architecture.
Given their characterization of control tasks, maximising selectivity leads to a selection of a model that is bad at memorization.
But why should we punish memorization?
Much of linguistic competence is about generalization, however memorization also plays a key role \citep{fodor1974, nooteboom2002storage, fromkin2018}, with word learning \citep{carey1978} being an obvious example.
Indeed, maximizing selectivity as a criterion for creating probes seems to artificially disfavor this property.

\begin{table*}
    \centering
    \resizebox{\textwidth}{!}{%
    \begin{tabular}{l l l c c c c c c c c}
    \toprule
 & \multicolumn{2}{c}{\textbf{\# Tokens}} & & & \multicolumn{1}{c}{\textbf{\textsc{bert}}} & \multicolumn{2}{c}{\textbf{fastText}}  & \multicolumn{2}{c}{\textbf{one-hot}} \\ \cmidrule(r){2-3} \cmidrule(r){6-6} \cmidrule(r){7-8} \cmidrule(r){9-10}
Language & Train & Test & \# POS & $\ent(T)$ & $\ent(T \mid R)$  & $\ent(T \mid \vc(R))$ & $\calG(T, R, \vc)$ & $\ent(T \mid \vc(R))$ & $\calG(T, R, \vc)$ \\
    \midrule
Basque & $\phantom{0}\phantom{0}72{,}869$ & $\phantom{0}24{,}335$ & 15 & 3.17 & 0.36 & 0.29 & -0.06 (-2.0\%) & 0.80 & 0.44 (14.0\%) \\
Czech & $1{,}173{,}281$ & $173{,}906$ & 16 & 3.33 & 0.10 & 0.11 & \phantom{-}0.02 (\phantom{-}0.5\%) & 0.35 & 0.25 (\phantom{0}7.6\%) \\
English & $\phantom{0}203{,}762$ & $\phantom{0}24{,}958$ & 16 & 3.61 & 0.23 & 0.39 & \phantom{-}0.16 (\phantom{-}4.4\%) & 0.64 & 0.41 (11.4\%) \\
Finnish & $\phantom{0}162{,}584$ & $\phantom{0}21{,}078$ & 14 & 3.17 & 0.25 & 0.19 & -0.06 (-2.0\%) & 0.80 & 0.54 (17.1\%) \\
Indonesian & $\phantom{0}\phantom{0}97{,}495$ & $\phantom{0}11{,}779$ & 15 & 3.24 & 0.38 & 0.35 & -0.03 (-0.8\%) & 0.64 & 0.26 (\phantom{0}8.0\%) \\
Korean & $\phantom{0}295{,}899$ & $\phantom{0}28{,}234$ & 16 & 3.04 & 0.33 & 0.60 & \phantom{-}0.27 (\phantom{-}8.8\%) & 1.15 & 0.82 (27.0\%) \\
Marathi & $\phantom{0}\phantom{0}\phantom{0}2{,}997$ & $\phantom{0}\phantom{0}\phantom{0}412$ & 15 & 3.17 & 0.76 & 0.90 & \phantom{-}0.14 (\phantom{-}4.4\%) & 1.49 & 0.74 (23.2\%) \\
Tamil & $\phantom{0}\phantom{0}\phantom{0}6{,}329$ & $\phantom{0}\phantom{0}1{,}988$ & 13 & 3.15 & 0.58 & 0.47 & -0.11 (-3.5\%) & 1.57 & 0.99 (31.4\%) \\
Telugu & $\phantom{0}\phantom{0}\phantom{0}5{,}082$ & $\phantom{0}\phantom{0}\phantom{0}721$ & 14 & 2.73 & 0.42 & 0.42 & -0.00 (-0.1\%) & 0.93 & 0.51 (18.6\%) \\
Turkish & $\phantom{0}\phantom{0}37{,}769$ & $\phantom{0}10{,}023$ & 13 & 3.03 & 0.36 & 0.23 & -0.13 (-4.2\%) & 0.88 & 0.52 (17.1\%) \\
Urdu & $\phantom{0}108{,}674$ & $\phantom{0}14{,}806$ & 15 & 3.23 & 0.32 & 0.41 & \phantom{-}0.09 (\phantom{-}2.8\%) & 0.54 & 0.22 (\phantom{0}6.9\%) \\
    \bottomrule 
    \end{tabular}
    }
    \caption{Amount of information $\bert$, fastText or one-hot embeddings share with a POS probing task. %
    $\ent(T)$ is estimated with a plug-in estimator from same treebanks we use to train the POS labelers.} \vspace{-.5em}
    \label{tab:results-full}
\end{table*}

\subsection{What Low-Selectivity Means}
\newcite{hewitt-liang-2019-designing} acknowledges that for the more complex
task of dependency edge prediction, a MLP probe is more accurate and, therefore, preferable despite its low selectivity.
However, they offer two counter-examples where the less selective neural probe exhibits drawbacks when compared to its more selective, linear counterpart. 
We believe both examples are a symptom of using a simple probe rather than of selectivity being a useful metric for probe selection.

First, \newcite[\S 3.6]{hewitt-liang-2019-designing} point out that, in their experiments, the MLP-1 model frequently mislabels the word with suffix \textit{-s} as \pos{NNPS} on the POS labeling task.
They present this finding as a possible example of a less selective probe being less faithful in representing what linguistic information has the model learned.
Our analysis leads us to believe that, on contrary, this shows that one should be using the best possible probe to minimize the chance of misinterpreting its encoded information.
Since more complex probes achieve higher accuracy on the task, as evidence by the findings of \newcite{hewitt-liang-2019-designing}, we believe that the overall trend of misinterpretation is higher for the probes with higher selectivity.
The same applies for the second example in \citealt[\S 4.2]{hewitt-liang-2019-designing} where a less selective probe appears to be less faithful.
The paper shows that the representations on ELMo's second layer fail to outperform its word type ones (layer zero) on the POS labeling task when using the MLP-1 probe.
While the paper argues this is evidence for selectivity being a useful metric in choosing appropriate probes, we argue that this demonstrates, yet again, that one needs to use a more complex probe to minimize the chances of misinterpreting what the model has learned.
The fact that the linear probe shows a difference only demonstrates that the information is perhaps more accessible with ELMo, not that it is not present.

\section{Experiments}\label{sec:experiments}
Despite our discussion in \cref{sec:we-hate-probing}, we still wish to empirically vet our estimation technique for the gain and we use this section to highlight the need to formally define ease of extraction (as argued in \cref{sec:ease-extract}).
We consider the tasks of POS and dependency labeling, using the universal POS tag \cite{petrov-etal-2012-universal} and dependency label information from the Universal Dependencies 2.5 \cite{11234/1-3105}.
We probe the multilingual release of BERT\footnote{We used \newcite{wolf2019hugging}'s implementation.} on eleven typologically diverse languages: Basque, Czech, English, Finnish, Indonesian, Korean, Marathi, Tamil, Telugu, Turkish and Urdu; and we compute the contextual representations of each sentence by feeding it into BERT and averaging the output word piece representations for each word, as tokenized in the treebank. 

\subsection{Control Functions}
We will consider two different control functions.
Each is defined as the composition $\vc = \ve \circ \id$ with a different look-up function:
\begin{itemize}
        \item $\ve_\textit{fastText}$  returns a language specific fastText embedding \citep{fasttext};
        \item $\ve_\textit{onehot}$ returns a one-hot embedding.\footnote{We initialize random embeddings at the type level, and let them train during the model's optimization. We also experiment with fixed random embeddings---results for this control are in the Appendix.}
\end{itemize}
These functions can be considered type level, as they remove the influence of context on the word. 

\subsection{Probe Architecture}
As expounded upon above, our purpose is to achieve the best bound on mutual information we can.
To this end, we employ a deep MLP as our probe. We define the probe as 
\begin{align}
    \qtheta(t &\mid \vr) =\\
    &\text{softmax}\left( W^{(m)} \sigma\left(W^{(m-1)}\cdots \sigma(W^{(1)}\,\vr)\right)\right) \nonumber
\end{align}
an $m$-layer neural network with the non-linearity $\sigma(\cdot) = \relu(\cdot)$. 
The initial projection matrix is $W^{(1)} \in \R^{r_1 \times d}$ and the final projection matrix is $W^{(m)} \in \R^{|\calT| \times r_{m-1}}$, where $r_i=\frac{r}{2^{i-1}}$.
The remaining matrices are $W^{(i)} \in \R^{r_i \times r_{i-1}}$,
so we halve the number of hidden states in each layer.
We optimize over the hyperparameters---number of layers, hidden size, one-hot embedding size, and dropout---by using random search.
For each estimate, we train 50 models and choose the one with the best validation cross-entropy.
The cross-entropy in the test set is then used as our entropy estimate.
For dependency labeling, we follow \citet{tenney2019} and concatenate the embeddings for both a token and its head---i.e. $r=[r_i; r_{\mathrm{head}(i)}]$---as such, the initial projection matrix is actually $W^{(1)} \in \R^{r_1 \times 2 d}$.

\begin{table*}
    \centering
    \resizebox{\textwidth}{!}{%
    \begin{tabular}{l l l c c c c c c c c}
    \toprule
 & \multicolumn{2}{c}{\textbf{\# Tokens}} & & & \multicolumn{1}{c}{\textbf{\textsc{bert}}} & \multicolumn{2}{c}{\textbf{fastText}}  & \multicolumn{2}{c}{\textbf{one-hot}} \\ \cmidrule(r){2-3} \cmidrule(r){6-6} \cmidrule(r){7-8} \cmidrule(r){9-10}
Language & Train & Test & \# Classes & $\ent(T)$ & $\ent(T \mid R)$  & $\ent(T \mid \vc(R))$ & $\calG(T, R, \vc)$ & $\ent(T \mid \vc(R))$ & $\calG(T, R, \vc)$ \\
    \midrule
Basque & $\phantom{0}\phantom{0}67{,}578$ & $\phantom{0}22{,}575$ & 29 & 4.03 & 0.62 & 0.75 & 0.13 (\phantom{0}3.1\%) & 1.39 & 0.77 (19.0\%) \\
Czech & $1{,}104{,}787$ & $163{,}770$ & 42 & 4.24 & 0.42 & 0.59 & 0.17 (\phantom{0}4.1\%) & 0.97 & 0.55 (13.1\%) \\
English & $\phantom{0}192{,}042$ & $\phantom{0}23{,}019$ & 48 & 4.48 & 0.45 & 1.00 & 0.55 (12.2\%) & 1.35 & 0.89 (19.9\%) \\
Finnish & $\phantom{0}150{,}362$ & $\phantom{0}19{,}515$ & 44 & 4.42 & 0.62 & 0.72 & 0.10 (\phantom{0}2.2\%) & 1.77 & 1.15 (26.0\%) \\
Indonesian & $\phantom{0}\phantom{0}93{,}054$ & $\phantom{0}11{,}223$ & 30 & 4.16 & 0.77 & 1.13 & 0.36 (\phantom{0}8.6\%) & 1.52 & 0.75 (18.0\%) \\
Korean & $\phantom{0}273{,}436$ & $\phantom{0}26{,}079$ & 30 & 4.17 & 0.40 & 0.76 & 0.36 (\phantom{0}8.7\%) & 1.50 & 1.10 (26.4\%) \\
Marathi & $\phantom{0}\phantom{0}\phantom{0}2{,}624$ & $\phantom{0}\phantom{0}\phantom{0}365$ & 39 & 4.01 & 1.39 & 1.65 & 0.26 (\phantom{0}6.5\%) & 2.26 & 0.87 (21.6\%) \\
Tamil & $\phantom{0}\phantom{0}\phantom{0}5{,}929$ & $\phantom{0}\phantom{0}1{,}869$ & 28 & 3.78 & 1.17 & 1.23 & 0.06 (\phantom{0}1.6\%) & 2.44 & 1.27 (33.7\%) \\
Telugu & $\phantom{0}\phantom{0}\phantom{0}4{,}031$ & $\phantom{0}\phantom{0}\phantom{0}575$ & 41 & 3.64 & 1.09 & 1.31 & 0.23 (\phantom{0}6.2\%) & 1.85 & 0.76 (20.9\%) \\
Turkish & $\phantom{0}\phantom{0}34{,}120$ & $\phantom{0}\phantom{0}9{,}046$ & 31 & 3.95 & 1.12 & 1.17 & 0.05 (\phantom{0}1.2\%) & 2.01 & 0.89 (22.4\%) \\
Urdu & $\phantom{0}104{,}647$ & $\phantom{0}14{,}271$ & 24 & 3.83 & 0.63 & 0.93 & 0.30 (\phantom{0}8.0\%) & 1.08 & 0.46 (11.9\%) \\
    \bottomrule 
    \end{tabular}
    }
    \caption{Amount of information $\bert$, fastText or one-hot embeddings share with a dependency arc labeling task. 
    $\ent(T)$ is again estimated with a plug-in estimator from same treebanks we use to train our models.
    } \vspace{-.5em}
    \label{tab:results-dependency}
\end{table*}

\subsection{Results}
We know $\bert$ can generate text in many languages. 
Here we assess how much it actually ``knows'' about syntax in those languages---or at least how much we can extract from it given as powerful probes as we can train. We further evaluate how much it knows above and beyond simple type-level baselines.

\paragraph{POS tags} \cref{tab:results-full} presents these results, showing how much information $\bert$, fastText, and one-hot embeddings encode about POS tagging.
We see that---in all analysed languages---type level embeddings can already capture most of the uncertainty in POS tagging.
We also see that BERT only shares a small amount of extra information with the task, having small gains in all languages---$\bert$ even presents negative gains in some of them. Although this may seem to contradict the information processing inequality, it is actually caused by the difficulty of approximating $\id$ and $\vc(\cdot)$ with a finite training set---causing $\KLqthetaone(T \mid R)$ to be larger than $\KLqthetatwo(T \mid \vc(R))$.
This highlights the need to formalize \emph{ease of extraction}, as discussed in \cref{sec:ease-extract}.

\paragraph{Dependency labels} As shown in \cref{tab:results-dependency}, $\bert$ improves over type-level embeddings in all languages on this task. Nonetheless, although this is a much more context-dependent task, we see $\bert$-based estimates reveal at most $12\%$ more information than fastText in English, the highest resource language in our set. If we look at the lower-resource languages, in five of them the gains are of less than $5\%$.

\paragraph{Discussion} 
When put into perspective, multilingual $\bert$'s representations do not seem to encode much more information about syntax than a simple baseline. On POS labeling, $\bert$ only improves upon fastText in five of the eleven analysed languages---and by small amounts (less than $9\%$) when it does. 
Even at dependency labelling, a task considered to require more contextual knowledge, we could only decode from $\bert$ at most (in English) $12\%$ additional information--- which again highlights the need to formalize ease of extraction.

\section{Conclusion}
We propose an information-theoretic operationalization of probing that defines it as the task of estimating conditional mutual information.
We introduce control functions, which put in context our mutual information estimates---how much more informative are contextual representations than some knowledge judged to be trivial? 
We further explored our operationalization and showed that, given perfect probes, probing can only yield insights into the language itself and cannot tell us anything about the representations under investigation.
Keeping this in mind, we suggest a change of focus---instead of concentrating on probe size or information, we should pursue \emph{ease of extraction} going forward.

On a final note, we apply our formalization to evaluate multilingual $\bert$'s syntactic knowledge on a set of eleven typologically diverse languages.
Although it does encode a large amount of information about syntax---more than $76\%$ and $65\%$, respectively, about POS and dependency labels in all languages\footnote{This is measured as the relative difference between $\ent(T)$ and $\ent(T\mid R)$. On average, this value is $88\%$ and $80\%$ on POS and dependency labels, respectively.}---$\bert$ only encodes at most $12\%$ more information than a simple baseline (a type-level representation). 
On POS labeling, more specifically, our MI estimates based on $\bert$ are higher than the control in less than half of the analyzed languages. 
This indicates that word-level POS labeling may not be ideal for contemplating the syntax contained in contextual word embeddings. 
\section*{Acknowledgements} 
The authors would like to thank Adam Poliak and John Hewitt for several helpful suggestions. 
\bibliography{acl2020}
\bibliographystyle{acl_natbib}

\clearpage
\appendix

\section{Variational Bounds}\label{appendix:a}

\begin{figure*}
    \centering
\begin{align}
    \calG(T, R, \ve) &:= \ent(T ; \vc(R)) - \ent(T \mid R) \label{eq:estimate_error}\\
    &= \entqthetatwo(T \mid \vc(R)) - \underset{\vr \sim p(\cdot)}{\mathbb{E}} \KL(p(\cdot \mid \vc(\vr)) \mid \mid \qthetatwo(\cdot \mid \vc(\vr))) \nonumber \\
    & \qquad - \entqthetaone(T \mid R) + \underset{\vr \sim p(\cdot)}{\mathbb{E}} \KL(p(\cdot \mid \vr) \mid \mid \qthetaone(\cdot \mid \vr)) \nonumber\\
    &= \entqthetatwo(T \mid \vc(R)) - \KLqthetatwo(T, \vc(R)) - \entqthetaone(T \mid R) + \KLqthetaone(T \mid R) \nonumber \\
    &= \entqthetatwo(T \mid \vc(R)) - \entqthetaone(T \mid R) + \KLqthetaone(T \mid R) - \KLqthetatwo(T, \vc(R)) \nonumber \\ 
    &= \underbrace{\calG_{\qtheta}(T, R, \ve)}_{\textit{estimated gain}} + \underbrace{\KLqthetaone(T \mid R) - \KLqthetatwo(T, \vc(R))}_{\textit{estimation error}} \nonumber
\end{align}
\caption{Derivation of the estimation error.}\label{fig:estimation_error}
\end{figure*}

\begin{thm}\label{thm:variational_bounds}
The estimation error between $\calG_{\qtheta}(T, R, \ve)$ and the true gain can be upper- and lower-bounded by two distinct Kullback--Leibler divergences.
\end{thm}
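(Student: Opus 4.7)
The plan is to derive an exact decomposition of $\calG(T, R, \ve)$ into the estimated gain $\calG_{\qtheta}(T, R, \ve)$ plus a signed ``estimation error'' built from two KL divergences, and then read off the upper and lower bounds by discarding one nonnegative term at a time.

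First, I would invoke the cross-entropy identity already established in \cref{eq:ent_estimate} twice: once on $\ent(T \mid R)$ with probe $\qthetaone$, giving $\ent(T \mid R) = \entqthetaone(T \mid R) - \KLqthetaone(T, R)$, and once on $\ent(T \mid \vc(R))$ with the control probe $\qthetatwo$, giving $\ent(T \mid \vc(R)) = \entqthetatwo(T \mid \vc(R)) - \KLqthetatwo(T, \vc(R))$. Substituting both into the definition $\calG(T, R, \ve) = \ent(T \mid \vc(R)) - \ent(T \mid R)$ and collecting the two cross-entropy terms into $\calG_{\qtheta}(T, R, \ve)$ yields the key identity
\begin{equation*}
\calG(T, R, \ve) = \calG_{\qtheta}(T, R, \ve) + \KLqthetaone(T, R) - \KLqthetatwo(T, \vc(R)).
\end{equation*}
This is essentially the calculation shown in \cref{fig:estimation_error}; the only care needed is keeping the indices $1$ and $2$ attached to the right representation throughout the regrouping.

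From this identity the two bounds follow immediately. Since $\KLqthetatwo(T, \vc(R)) \geq 0$, dropping that term gives the upper bound $\calG(T, R, \ve) \leq \calG_{\qtheta}(T, R, \ve) + \KLqthetaone(T, R)$, which is \cref{cor:upper}. Symmetrically, since $\KLqthetaone(T, R) \geq 0$, dropping it gives the lower bound $\calG(T, R, \ve) \geq \calG_{\qtheta}(T, R, \ve) - \KLqthetatwo(T, \vc(R))$, which is \cref{cor:lower}.

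The main obstacle is purely bookkeeping: pairing each KL divergence with the correct probe and representation ($\qthetaone$ against $R$ versus $\qthetatwo$ against $\vc(R)$) and tracking signs after substitution, since the gain is a difference of entropies and the cross-entropy identity introduces a minus sign in front of each KL. Because the two probes share no parameters by construction and the underlying identity is just the expectation of $\log p(t\mid \vr) - \log \qtheta(t \mid \vr)$ added and subtracted inside the entropy, there is no deeper analytic content to manage beyond this.
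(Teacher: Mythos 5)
Your proof is correct and mirrors the paper's own derivation: the paper likewise applies the cross-entropy decomposition from \cref{eq:ent_estimate} to both $\ent(T \mid R)$ and $\ent(T \mid \vc(R))$, rearranges to obtain $\calG(T,R,\ve) = \calG_{\qtheta}(T,R,\ve) + \KLqthetaone(T,R) - \KLqthetatwo(T,\vc(R))$ (\cref{fig:estimation_error}), and then drops the appropriate nonnegative KL term to obtain each bound. No substantive difference in approach.
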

\begin{proof}
We first find the error given by our estimate, which is a difference between two KL divergences---as shown in \cref{eq:estimate_error} in \cref{fig:estimation_error}.
Making use of this error, we trivially find an upper-bound on the estimation error as
\begin{align}
    \Delta &= \KLqthetaone(T \mid R) - \KLqthetatwo(T, \vc(R)) \\
     &\le \KLqthetaone(T \mid R) \nonumber
\end{align}
which follows since KL divergences are never negative. Analogously, we find a lower-bound as
\begin{align}
    \Delta &= \KLqthetaone(T \mid R) - \KLqthetatwo(T, \vc(R)) \\
     &\ge - \KLqthetatwo(T, \vc(R)) \nonumber
\end{align}

\end{proof} 

\section{Further Results}\label{appendix:further_results}

In this section, we present accuracies for the models trained using $\bert$, fastText and one-hot embeddings, and the full results on random embeddings. These random embeddings are generated once before the task, at the type level, and kept fixed without training. \cref{tab:results-extra_pos} shows that both BERT and fastText present high accuracies at POS labeling in all languages, except Tamil and Marathi.
One-hot and random results are considerably worse, as expected, since they could not do more than take random guesses (e.g. guessing the most frequent label in the training test) in any word which was not seen during training.  \cref{tab:results-extra_dep} presents similar results for dependency labeling, although accuracies for this task are considerably lower.

These tables also show how ambiguous the linguistic task is given the word types ($\ent(T \mid \id(R))$). These values were calculated using a plug-in estimator on the treebanks---which are known to underestimate entropies when used in undersampled regimes \citep{archer2014bayesian}---so they should not be considered as good approximations. Even so, we can see that most of the analysed languages are not very ambiguous with respect to POS labeling, and that there is a large variability of uncertainty across languages with respect to both tasks.

\begin{table*}[h]
    \centering
    \resizebox{\textwidth}{!}{%
    \begin{tabular}{l l l l l l l l l}
    \toprule
 & \multicolumn{4}{c}{\textbf{accuracies}} & \multicolumn{2}{c}{\textbf{base entropies}} & \multicolumn{2}{c}{\textbf{random}} \\ \cmidrule(r){2-5} \cmidrule(r){6-7} \cmidrule(r){8-9}
Language & $\bert$ & fastText & one-hot & random & $\ent(T)$ & $\ent(T \mid \id(R))$ & $\ent(T \mid \vc(R))$ & $\calG(T, R, \vc)$ \\ 
    \midrule
Basque & 0.92 & 0.93 & 0.82 & 0.82 & 3.17 & 0.13 & 0.83 & 0.48 (15.0\%) \\
Czech & 0.98 & 0.98 & 0.91 & 0.87 & 3.33 & 0.06 & 0.57 & 0.47 (14.0\%) \\
English & 0.95 & 0.90 & 0.85 & 0.83 & 3.61 & 0.26 & 0.72 & 0.48 (13.4\%) \\
Finnish & 0.95 & 0.96 & 0.82 & 0.81 & 3.17 & 0.06 & 0.87 & 0.62 (19.6\%) \\
Indonesian & 0.92 & 0.92 & 0.86 & 0.84 & 3.24 & 0.16 & 0.68 & 0.30 (\phantom{0}9.2\%) \\
Korean & 0.92 & 0.85 & 0.73 & 0.70 & 3.04 & 0.14 & 1.33 & 1.01 (33.1\%) \\
Marathi & 0.83 & 0.79 & 0.68 & 0.69 & 3.17 & 0.48 & 1.43 & 0.67 (21.1\%) \\
Tamil & 0.88 & 0.89 & 0.64 & 0.68 & 3.15 & 0.09 & 1.41 & 0.82 (26.2\%) \\
Telugu & 0.91 & 0.92 & 0.78 & 0.82 & 2.73 & 0.07 & 0.86 & 0.44 (16.2\%) \\
Turkish & 0.92 & 0.95 & 0.79 & 0.80 & 3.03 & 0.08 & 0.81 & 0.45 (14.7\%) \\
Urdu & 0.92 & 0.91 & 0.88 & 0.87 & 3.23 & 0.29 & 0.59 & 0.27 (\phantom{0}8.3\%) \\
    \bottomrule 
    \end{tabular}
    }
    \caption{Accuracies of the models trained on $\bert$, fastText, one-hot and random embeddings for the POS tagging task.}
    \label{tab:results-extra_pos}
\end{table*}

\begin{table*}[h]
    \centering
    \resizebox{\textwidth}{!}{%
    \begin{tabular}{l l l l l l l l l}
    \toprule
 & \multicolumn{4}{c}{\textbf{accuracies}} & \multicolumn{2}{c}{\textbf{base entropies}} & \multicolumn{2}{c}{\textbf{random}} \\ \cmidrule(r){2-5} \cmidrule(r){6-7} \cmidrule(r){8-9}
Language & $\bert$ & fastText & one-hot & random & $\ent(T)$ & $\ent(T \mid \id(R))$ & $\ent(T \mid \vc(R))$ & $\calG(T, R, \vc)$ \\ 
    \midrule
Basque & 0.87 & 0.83 & 0.71 & 0.65 & 4.03 & 0.55 & 1.71 & 1.08 (26.9\%) \\
Czech & 0.91 & 0.88 & 0.80 & 0.68 & 4.24 & 0.78 & 1.58 & 1.16 (27.3\%) \\
English & 0.91 & 0.78 & 0.72 & 0.68 & 4.48 & 1.01 & 1.61 & 1.16 (25.8\%) \\
Finnish & 0.87 & 0.85 & 0.65 & 0.56 & 4.42 & 0.52 & 2.21 & 1.59 (36.1\%) \\
Indonesian & 0.85 & 0.76 & 0.69 & 0.64 & 4.16 & 0.83 & 1.76 & 0.99 (23.9\%) \\
Korean & 0.92 & 0.84 & 0.68 & 0.56 & 4.17 & 0.35 & 2.08 & 1.68 (40.4\%) \\
Marathi & 0.75 & 0.70 & 0.61 & 0.62 & 4.01 & 0.81 & 2.12 & 0.73 (18.2\%) \\
Tamil & 0.76 & 0.74 & 0.51 & 0.54 & 3.78 & 0.31 & 2.32 & 1.15 (30.5\%) \\
Telugu & 0.80 & 0.78 & 0.67 & 0.69 & 3.64 & 0.31 & 1.96 & 0.88 (24.1\%) \\
Turkish & 0.77 & 0.75 & 0.59 & 0.54 & 3.95 & 0.54 & 2.14 & 1.02 (25.8\%) \\
Urdu & 0.87 & 0.80 & 0.76 & 0.73 & 3.83 & 1.02 & 1.26 & 0.63 (16.4\%) \\
    \bottomrule 
    \end{tabular}
    }
    \caption{Accuracies of the models trained on $\bert$, fastText, one-hot and random embeddings for the dependency labeling task.}
    \label{tab:results-extra_dep}
\end{table*}

\end{document}